\newtheorem{theorem}{Theorem}[section]
\newtheorem{lemma}[theorem]{Lemma}
\newtheorem{problem}{Problem}[section]
\DeclareMathOperator*{\argmax}{arg\,max}
\DeclareMathOperator*{\argmin}{arg\,min}
\DeclareMathOperator*{\denv}{\mathcal{D}_{\text{env}}}
\DeclareMathOperator*{\dmodel}{\mathcal{D}_{\text{model}}}
\DeclareMathOperator*{\piphi}{\pi_{\phi}}
\icmltitlerunning{Maximum Entropy Model Rollouts}
\begin{document}

\twocolumn[
\icmltitle{Maximum Entropy Model Rollouts: \\Fast Model Based Policy Optimization without Compounding Errors}





\begin{icmlauthorlist}
\icmlauthor{Chi Zhang}{usc_cs}
\icmlauthor{Sanmukh Rao Kuppannagari}{usc_ee}
\icmlauthor{Viktor K Prasanna}{usc_ee}
\end{icmlauthorlist}

\icmlaffiliation{usc_cs}{Department of Computer Science, University of Southern California, Los Angeles, CA}
\icmlaffiliation{usc_ee}{Department of Electrical and Computer Engineering, University of Southern California, Los Angeles, CA}

\icmlcorrespondingauthor{Chi Zhang}{zhan527@usc.edu}

\icmlkeywords{Model-based Reinforcement Learning, Compounding Errors, Sample Efficiency, Computation Efficiency, Maximum Entropy}

\vskip 0.3in
]



\printAffiliationsAndNotice{}  

\begin{abstract}
  Model usage is the central challenge of model-based reinforcement learning. Although dynamics model based on deep neural networks provide good generalization for single step prediction, such ability is over exploited when it is used to predict long horizon trajectories due to compounding errors. In this work, we propose a Dyna-style model-based reinforcement learning algorithm, which we called Maximum Entropy Model Rollouts (MEMR). To eliminate the compounding errors, we only use our model to generate single-step rollouts. Furthermore, we propose to generate \emph{diverse} model rollouts by non-uniform sampling of the environment states such that the entropy of the model rollouts is maximized. We mathematically derived the maximum entropy sampling criteria for one data case under Gaussian prior. To accomplish this criteria, we propose to utilize a prioritized experience replay. Our preliminary experiments in challenging locomotion benchmarks show that our approach achieves the same sample efficiency of the best model-based algorithms, matches the asymptotic performance of the best model-free algorithms, and significantly reduces the computation requirements of other model-based methods.
\end{abstract}

\section{Introduction}
Model-based reinforcement learning (MBRL) \cite{mbpo, steve, slbo, pets} shows competitive performance compared with best model-free reinforcement learning (MFRL) algorithms \cite{ppo, trpo, dqn, sac, sac_algo_apps} with significantly fewer environment samples on challenging robotics locomotion benchmarks \cite{mujoco}.  A MFRL algorithm learns complex skills by maximizing a scalar reward designed by human engineering. However, to obtain promising performance a large number of environment interactions are needed which may take a long time in real-world applications. In such cases, MBRL is appealing due to its superior sample efficiency that relies on the generalization of a learned predictive dynamics model. However, the quality of the policy trained on imagined trajectories is often worse asymptotically than the best MFRL counterparts due to the imperfect models.

Recently, \cite{mbpo} proposed Model-based Policy Optimization (MBPO), including a theoretical framework that encourages short-horizon model usage based on an optimistic assumption of a bounded model generalization error given policy shift. Although empirical studies have shown support evidence, this property is hard to guarantee in the whole state distribution. Moreover, \emph{uniform} sampling of the environment states to generate branched model rollouts degrades the \emph{diversity} of the model dataset, especially when the policy shift is small, which makes the policy updates inefficient.

Our main contribution is a practical algorithm, which we called Maximum Entropy Model Rollouts (MEMR) based on the aforementioned insights. The differences between MEMR and MBPO are: 1) MEMR follows Dyna \cite{dyna} that only generates single-step model rollouts while MBPO encourages generating short-horizon model rollouts.
The generalization ability of MEMR is strictly guaranteed by supervised machine learning theory, which can be empirically estimated by validation errors \cite{understanding_ml}. 2) MEMR utilizes a prioritized experience replay \cite{prioritized_experience_replay} to generate \emph{max-diversity} model rollouts for efficient policy updates.
We validate this idea on challenging locomotion benchmarks \cite{mujoco} and the experimental results show that MEMR matches the asymptotic performance and sample efficiency of MBPO \cite{mbpo} while significantly reducing the number of policy updates and model rollouts leading to faster learning speed.

\section{Preliminaries}
 Reinforcement Learning algorithms aim to solve Markov Decision Process (MDP) with unknown dynamics. A Markov decision process (MDP) \cite{rl_intro} is defined as a tuple $<\mathcal{S}, \mathcal{A}, R, P, \mu>$, where $\mathcal{S}$ is the set of states, $\mathcal{A}$ is the set of actions, $R(s, a, s'):\mathcal{S}\times\mathcal{A}\times\mathcal{S}\rightarrow \mathbb{R}$ defines the intermediate reward when the agent transits from state $s$ to $s'$ by taking action $a$, $P(s'|s,a):\mathcal{S}\times\mathcal{A}\times\mathcal{S}\rightarrow [0, 1]$ defines the probability when the agent transits from state $s$ to $s'$ by taking action $a$, $\mu: \mathcal{S}\rightarrow[0, 1]$ defines the starting state distribution. The objective of reinforcement learning is to select policy $\pi: \mathcal{\mu}\rightarrow P(A)$ such that 
\begin{equation}
J(\pi)= \underset{\substack{s_0\sim\mu,a_t\sim \pi(\cdot|s_t)\\s_{t+1}\sim P(\cdot|s_t,a_t)}}{\mathbb{E}}[\sum_{t=0}^{\infty}\gamma^t R(s_t,a_t,s_{t+1})]
\label{eq:mdp_objective}
\end{equation}
is maximized.

\subsection{Prioritized Experience Replay}
Prioritized experience replay \cite{prioritized_experience_replay} is introduced to increase the learning efficiency of DQN \cite{dqn}, where the probability of each transition is proportional to the absolute TD error \cite{q_learning}. To avoid overfitting, stochastic prioritization is utilized and the bias is corrected via annealed importance sampling. In this work, we adopt the same idea with a custom prioritization criteria such that the joint entropy of the state and action in the model dataset is maximized.

\subsection{Model-based Policy Optimization}

Model-based policy optimization (MBPO) \cite{mbpo} achieves state-of-the-art sample efficiency and matches the asymptotic performance of MFRL approaches. MBPO optimizes a policy with soft actor-critic (SAC) \cite{sac} under the data distribution collected by unrolling the learned dynamics model using the current policy. However, the sample efficiency comes at the cost of 2.5x to 5x increased number of policy updates compared with SAC \cite{sac} and a large number of model rollouts, that significantly decreases the training speed. To mitigate this bottleneck, we analyze the model usage and model rollout distribution and propose insights on how to improve MBPO to obtain better computation efficiency.

\paragraph{Model usage.} In MBPO, learned dynamics model is used to generate branched model rollouts with short horizons \cite{mbpo}. Although \cite{mbpo} presented theoretical analysis to bound the policy performance trained using model generate rollouts, the over exploitation of model generalization can't be eliminated. In this work, \emph{one of our core idea is that we only rely on learned model to generate one-step rollouts, in which case we interpret it as model-based exploration.} The nice property of this model usage is the natural bounded model generalization error, which can be estimated in practice by the validation dataset \cite{understanding_ml}.

\paragraph{Model rollout distribution.} Uniform sampling of true states \footnote{States encountered in real environment as opposed to imagined states that are generated by the model.} to generate model rollouts is adopted in MBPO \cite{mbpo}. This potentially generates large amount of similar data when the policy and the learned model changes slowly as training progresses. As result, the efficiency of the policy updates is deteriorated. In this work, \emph{we propose to sample true states to generate single-step model rollouts such that the joint entropy of the state and action of the model dataset is maximized.} The intuition is to increase the "diversity" of the model dataset, from which the policy can benefit for efficient learning.

\section{Maximum Entropy Model Rollouts}
In this section, we unveil the technical details of our Maximum Entropy Model Rollouts (MEMR) for model based policy optimization. First, we propose the  Maximum Entropy Sampling Theorem to help understand the choice of our prioritization criteria. Based on the theoretical analysis, we propose a practical implementation of this idea and discuss the challenges posed by runtime complexity along with their fixes.


\subsection{Maximum Entropy Sampling Criteria}
We begin by considering the following problem definition:
\begin{problem}[Maximum Entropy Sampling]
  Let $\denv=\{s_i\}_{i=1}^{N_{\text{env}}}$ be the collection of all the states in the environment dataset. Let $\dmodel=\{(s,a)\}_{j=1}^{N_{\text{model}}}$ be the collection of all the state-action pairs in the model dataset\footnote{The tasks considered in this work are deterministic so we omit $s'$ for simplicity.}. Assume for each state in $\denv$, we sample action $a_i\sim \pi_{\phi}(\cdot|s_i)$ using the current policy denoted as $\mathcal{D}_{\text{sample}}=\{(s_i, a_i)\}_{i=1}^{N_{\text{env}}}$. Assume we parameterize the policy distribution derived from the model dataset as a Gaussian distribution with diagonal covariance: $\pi_{\psi}(a_i|s_i)=\mathcal{N}(\mu_{\psi}(s_i), \Sigma_{\psi}(s_i))$. Let the joint entropy of the state-action in the model dataset be $H(S,A)$. Now we select $(s_k, a_k)$ from $D_{\text{sample}}$ and add it to the $\dmodel$. Let the joint entropy of the new $\mathcal{D}'_{\text{model}}=\dmodel\cup \{(s_k,a_k)\}$ be $H(S',A')$, the optimal sampling criteria problem is to choose index $k$ such that $H(S',A')-H(S,A)$ is maximized.
\label{prob:optimal_sampling}
\end{problem}

\begin{theorem}[Maximum Entropy Sampling Theorem]
 Assume $N_{\text{model}}\gg 1$ such that the state distribution of $\dmodel$ and $\mathcal{D}'_{\text{model}}$ are identical, then
 \begin{align}
 k=\argmin_i \log(\sqrt{2\pi}\pi_{\psi}(a_i|s_i) \sigma(\pi_{\psi}(\cdot|s_i)))
 \end{align}
 where $\pi_{\psi}(a_i|s_i)$ is the probability of model data policy at $(s_i,a_i)$, $\sigma(\pi_{\psi}(\cdot|s_i))$ is the standard deviation of the conditional distribution at $s_i$.
\label{the:optimal_sampling_main}
\end{theorem}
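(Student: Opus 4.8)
The plan is to reduce the joint-entropy increment to the change of a single Gaussian conditional and then read the criterion off the Gaussian density. First I would apply the chain rule $H(S,A)=H(S)+H(A\mid S)$ and, analogously, $H(S',A')=H(S')+H(A'\mid S')$. By the standing assumption $N_{\text{model}}\gg 1$, the state marginals of $\dmodel$ and of $\mathcal{D}'_{\text{model}}$ coincide, so $H(S')=H(S)$ and the problem collapses to maximizing $H(A'\mid S')-H(A\mid S)$ over the candidate index $k$.

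Second, using the Gaussian parametrization $\pi_\psi(\cdot\mid s)=\mathcal{N}(\mu_\psi(s),\Sigma_\psi(s))$ and the identity $H(A\mid S)=\mathbb{E}_{s}\!\left[\tfrac12\log\!\big(2\pi e\,\sigma_\psi^2(s)\big)\right]$ (scalar-action case; for a diagonal covariance one sums over action coordinates), I would argue that appending $(s_k,a_k)$ refits the conditional only at $s_k$ — and, through the network's generalization, over a neighborhood of $s_k$ of vanishing state-mass — so the only moving term is $\tfrac12\log\sigma_\psi^2(s_k)$. I would then invoke the incremental sample-variance update: if $n_k$ samples currently determine $\mu_\psi(s_k)$ and $\sigma_\psi^2(s_k)$, appending a value $a_k$ changes the variance by $\Delta\sigma_\psi^2(s_k)=\tfrac1{n_k+1}\!\left(\tfrac{n_k}{n_k+1}(a_k-\mu_\psi(s_k))^2-\sigma_\psi^2(s_k)\right)$, whence, to leading order in $n_k$, $\Delta\log\sigma_\psi^2(s_k)\approx \tfrac1{n_k}\!\left(\tfrac{(a_k-\mu_\psi(s_k))^2}{\sigma_\psi^2(s_k)}-1\right)$.

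Third, combining the state-weight $n_k/N_{\text{model}}$ of $s_k$ in $H(A\mid S)$ with the per-sample factor $1/n_k$, the count $n_k$ cancels, and the $k$-dependence of $H(A'\mid S')-H(A\mid S)$ is, up to the universal positive prefactor $1/(2N_{\text{model}})$ and an additive constant, exactly the standardized squared residual $\tfrac{(a_k-\mu_\psi(s_k))^2}{\sigma_\psi^2(s_k)}$. Maximizing the entropy gain is therefore the same as maximizing this quantity. To recover the stated form I would substitute $(s_k,a_k)$ into the Gaussian density, obtaining $\sqrt{2\pi}\,\sigma_\psi(s_k)\,\pi_\psi(a_k\mid s_k)=\exp\!\big(-\tfrac{(a_k-\mu_\psi(s_k))^2}{2\sigma_\psi^2(s_k)}\big)$, so that $\tfrac{(a_k-\mu_\psi(s_k))^2}{2\sigma_\psi^2(s_k)}=-\log\!\big(\sqrt{2\pi}\,\pi_\psi(a_k\mid s_k)\,\sigma(\pi_\psi(\cdot\mid s_k))\big)$; hence the $\argmax$ turns into the claimed $\argmin_i\log\!\big(\sqrt{2\pi}\,\pi_\psi(a_i\mid s_i)\,\sigma(\pi_\psi(\cdot\mid s_i))\big)$.

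I expect the middle step to be the main obstacle. On a continuous state space a single new sample has measure zero, so both "$H(S')=H(S)$" and "only the conditional at $s_k$ moves" must be justified through the $N_{\text{model}}\gg1$ / local-estimate viewpoint rather than literally, and one has to argue that the per-state count $n_k$ (equivalently, the effective neighborhood mass under a generalizing $\pi_\psi$) enters only through factors that cancel, so it drops out of the $\argmin$. I would also verify that the leading-order expansion of $\Delta\log\sigma_\psi^2(s_k)$ does not discard a term that is itself $k$-dependent at the same order, since such a term would perturb the criterion.
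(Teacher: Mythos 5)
Your proposal is correct and follows essentially the same route as the paper: the chain-rule decomposition with the state marginal frozen under $N_{\text{model}}\gg 1$, the incremental Gaussian MLE variance update (which is exactly the content of the paper's Lemma~\ref{lemma:optimal_gaussian}), the cancellation of the per-state count $n_k\approx p(s_k)N_{\text{model}}$ against the state weight $p(s_k)$, and the identification $\frac{(a_k-\mu_\psi(s_k))^2}{2\sigma_\psi^2(s_k)}=-\log\bigl(\sqrt{2\pi}\,\pi_\psi(a_k|s_k)\,\sigma(\pi_\psi(\cdot|s_k))\bigr)$ turning the $\argmax$ into the stated $\argmin$. Your explicit retention of the $-1$ term (the paper's silently dropped $\tfrac{N}{N+1}$ factor, which is of the same $1/N$ order) together with the observation that it becomes a $k$-independent constant after the cancellation is a minor tightening of the paper's argument rather than a different approach.
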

\begin{proof}
  See Appendix~\ref{sec:optimal_sampling_proof}, Theorem~\ref{the:optimal_sampling_appendix}.
\end{proof}

\subsection{Practical Implementation}
Theorem~\ref{the:optimal_sampling_main} provides a mathematically justified criteria to select states from the environment dataset for rollout generation to maximize the "diversity" of the model dataset, yet it poses several practical challenges to implement: 1) It requires a full sweep of all the states in the environment dataset before each sampling, which is $O(N_{env})$. This is problematic because $N_{env}$ grows linearly as training progresses. 2) Stochastic gradient descent assumes uniform sampling of the data distribution whereas prioritized sampling breaks this assumption and introduces bias. 3) Training the model data distribution to converge is expensive but crucial before evaluating the priority. A complete algorithm that handles the aforementioned practical challenges is presented in Algorithm~\ref{alg:memr}.

\begin{algorithm}[!t]
  \caption{Maximum Entropy Model Rollouts for Model-Based Policy Optimization}
  \label{alg:memr}
  \begin{algorithmic}[1]
    \STATE Initialize environment dataset $\denv$ and model dataset $\dmodel$
    \STATE Initialize SAC policy $\piphi$, predictive model $p_{\theta}$ and model derived policy distribution $\pi_{\psi}$
    \FOR{$t=1:\text{total\_num\_steps}$}
    \IF{$t\% \text{model\_update\_freq}==0$}
    \STATE Train model $p_{\theta}$ on $\denv$ via maximum likelihood
    \ENDIF
    \STATE Sample $a_t\sim\piphi(\cdot|s_t)$; Execute $a_t$ in the environment and observe $s_{t+1}$
    \STATE Compute priority $p_t$ according to Equation~\ref{eq:priority}; add $(s_t,a_t,s_{t+1},p_t)$ to $\denv$
    \FOR{$j=1:M$}
    \STATE Sample $s_j\sim P(j)=\nicefrac{p^{\alpha}_j}{\sum_{i}p^{\alpha}_j}$ from $\denv$
    \STATE Compute importance-sampling weight ${w_j=(N\cdot P(j))^{-\beta}/\max_i w_i}$
    \STATE Sample $a_j\sim\piphi(\cdot|s_j)$; Perform one-step rollout using $p_{\theta}$ and obtain $\hat{s}^{'}_j$.
    \ENDFOR
    \STATE Add $\{(s_j, a_j,\hat{s}^{'}_j,w_j)\}_{j=1}^{M}$ to the next segment in $\dmodel$
    \STATE Update $\pi_{\psi}$ on $\{(s_j, a_j)\}_{j=1}^{M}$ via maximum likelihood for $D$ epochs
    \STATE Update the priority of $s_j$ according to Equation~\ref{eq:priority} for all $j$
    \FOR{$G$ iterations}
    \STATE Sample segment index $k$ uniformly; Sample batch size $B$ from segment $k$ uniformly
    \STATE Update Q network as ${\phi_Q\leftarrow\phi_Q-\lambda_{\pi}\frac{1}{B}\sum_{i=1}^{B}w_i\cdot\nabla_{\phi_Q}J_{\pi}(\phi_Q, i)}$
    \STATE Update policy using ${J_{\pi}(\phi)=\frac{1}{B}\sum_{i=1}^{B}[D_{KL}(\pi||\exp\{Q^{\pi}-V^{\pi}\})]}$
    \ENDFOR
    \ENDFOR
  \end{algorithmic}
\end{algorithm}

\begin{figure}[!t]
  \centering
  \includegraphics[width=0.8\linewidth]{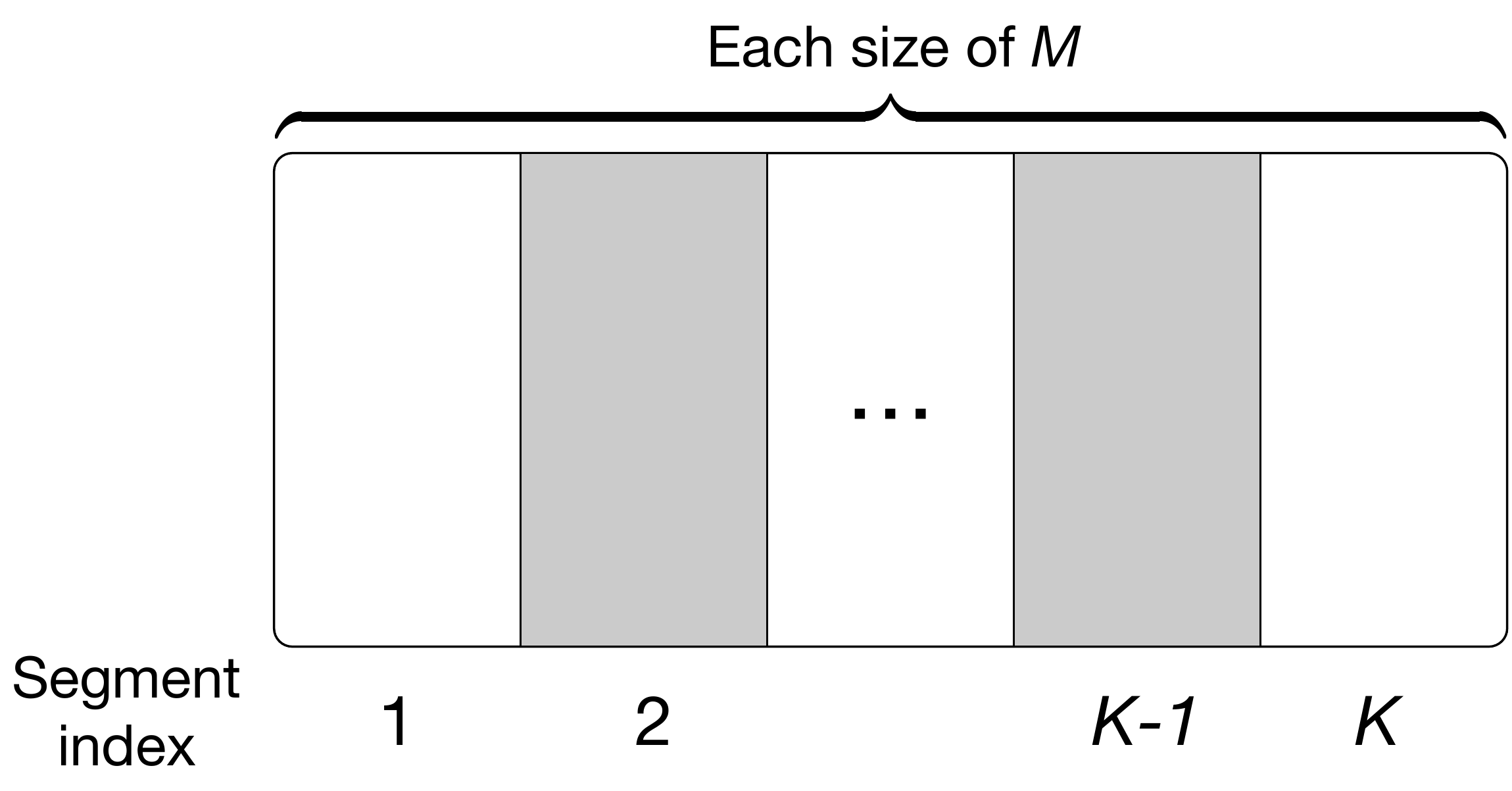}
  \caption{Segmented replay buffer for model generated rollouts. Each segment contains data sampled from the same environment state distribution. 
  }
  \label{fig:model_replay_buffer}
\end{figure}

\begin{figure*}[!t]
  \centering
  \begin{subfigure}{\linewidth}
    \centering
    \includegraphics[width=\linewidth]{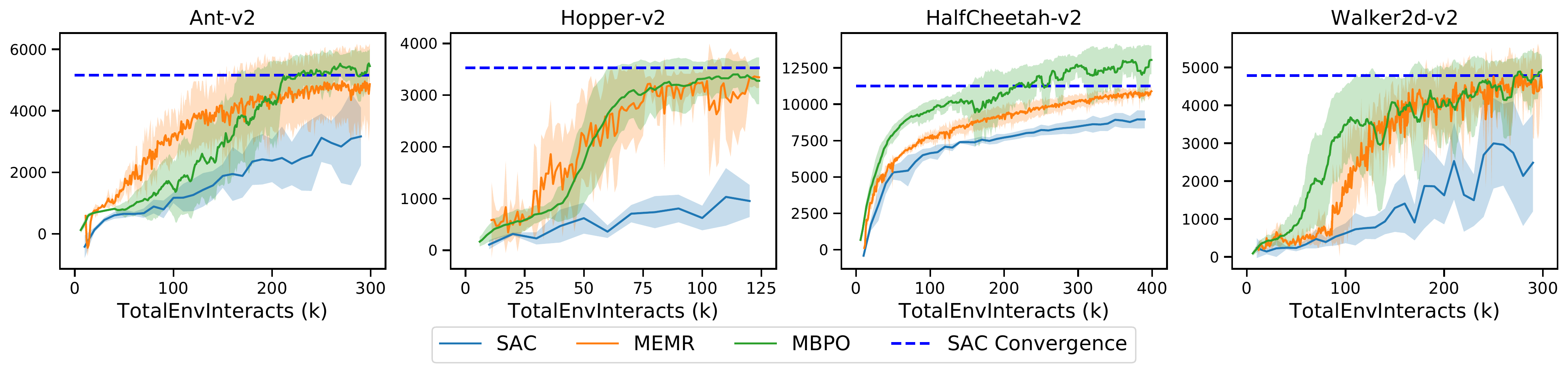}  
  \end{subfigure}
  \begin{subfigure}{\linewidth}
    \centering
    \includegraphics[width=\linewidth]{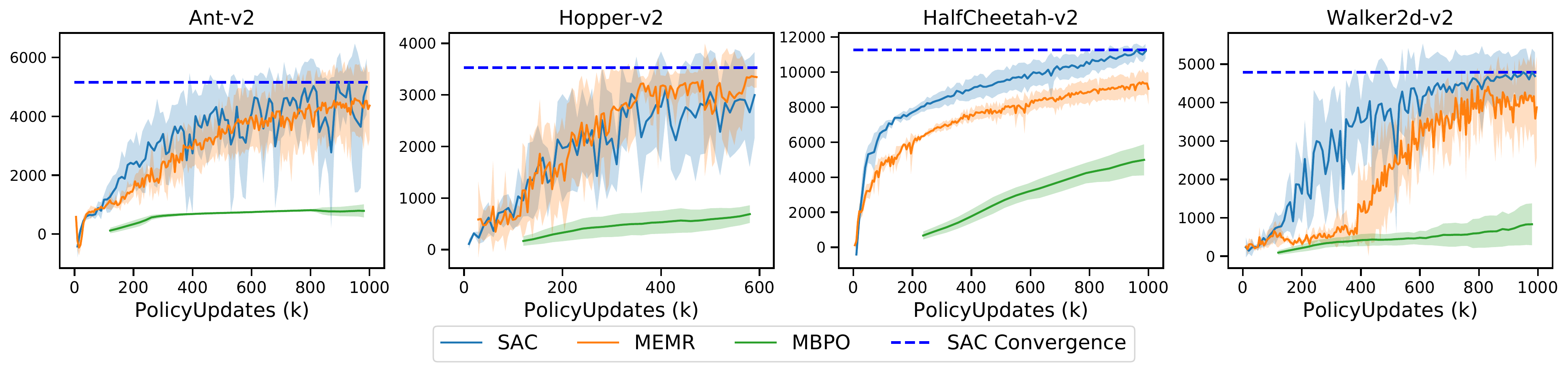}  
  \end{subfigure}
  \begin{subfigure}{0.5\linewidth}
    \centering
    \includegraphics[width=\linewidth]{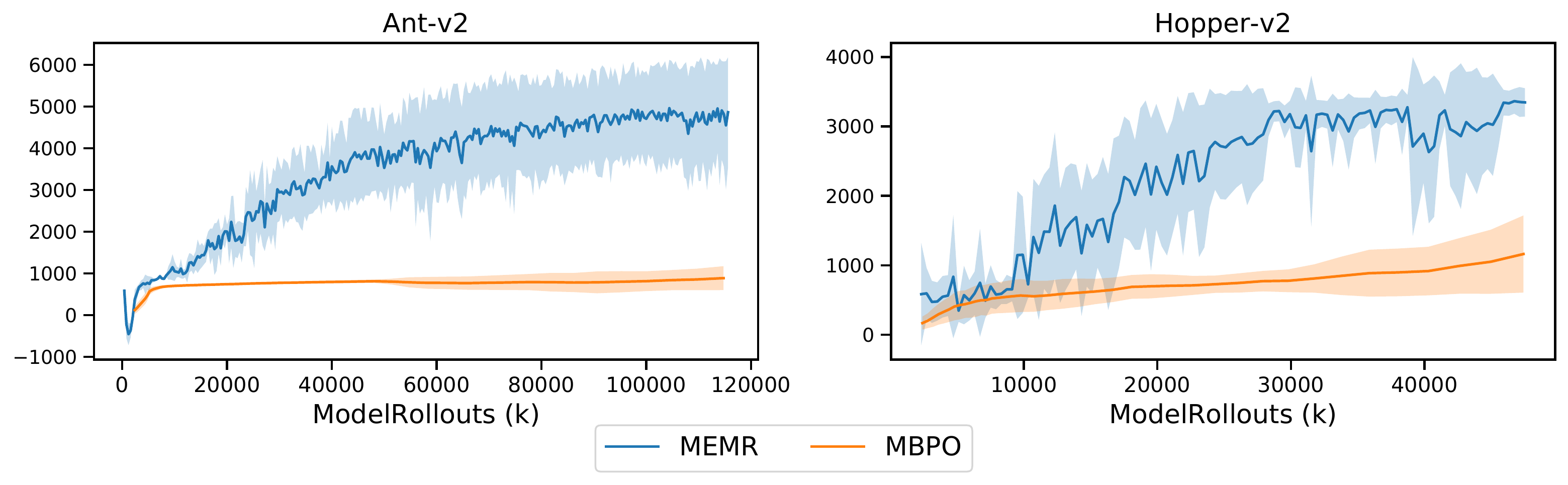}  
  \end{subfigure}
  \caption{Training curves of MEMR and two baselines. Solid curves depict the mean of five trials and shaded regions correspond to standard deviation among trials. The first row depicts the performance vs. the total number of environment interactions. We observe that MEMR matches the performance of state-of-the-art model-based and model-free algorithms. The second row shows the performance vs. the number of policy updates and we observe that MEMR converges as fast as SAC in terms of the number of updates. The third row shows that MEMR generates only a fraction of model rollouts compared to MBPO, which indicates far less training time.}
  \label{fig:results}
\end{figure*}

\paragraph{Stochastic prioritization.} Inspired by \cite{prioritized_experience_replay}, we only update the priority of the states that are just sampled to avoid an expensive full sweep before each sampling. An immediate consequence of this approach is that certain states with low priorities will not be sampled for a very long time. This potentially leads to overfitting. Following \cite{prioritized_experience_replay}, we use stochastic prioritization that interpolates between pure greedy and uniform sampling with the following probability of sampling state $i$:
\begin{align}
P(i)=\frac{p_{i}^{\alpha}}{\sum_{k}{p_{k}^{\alpha}}}
\label{eq:stochastic_sampling}
\end{align}
where $p_i\geq 0$ is the priority of state and action $i$. The exponent $\alpha$ determines how much prioritization is used, with $\alpha=0$ corresponding to the uniform case. According to Theorem~\ref{the:optimal_sampling_main}, we compute $p_i$ as 
\begin{align}
p_i=-\log(\sqrt{2\pi}\pi_{\psi}(a_i|s_i) \sigma(\pi_{\psi}(\cdot|s_i)))
\label{eq:priority}
\end{align}

\paragraph{Correcting the bias.} Using prioritized sampling introduces bias when fitting the Q network of the SAC. Inspired by \cite{prioritized_experience_replay}, we apply weighted importance-sampling (IS) when calculating the loss of the Q network, where the weight for sample $i$ is
\begin{align}
w_i={(\frac{1}{N}\cdot\frac{1}{P(i)})}^{\beta}
\end{align}

\paragraph{Segmented replay buffer.} According to Algorithm~\ref{alg:memr}, we update the priority after sampling states from the environment dataset to perform model rollouts. Thus, the sampling distribution of every $M$ model rollout generation is different. This leads to incorrect importance weights if we randomly sample a batch from the model dataset that contains data generated from different distributions to perform policy updates. To fix it, we introduce segmented replay buffer that group every $M$ rollouts in the same segment. During sampling for policy updates, we randomly sample a segment index, then sample a batch from that segment.

\paragraph{Training model derived policy distribution.} Fitting $\pi_{\psi}$ using $\dmodel$ via maximum likelihood to converge is costly since the size of $\dmodel$ is large and this operation must be performed every time we generate model rollouts. Since the data in model buffer is swapped rapidly, we treat it as an online learning procedure and only perform several gradient updates on the newly stored data.

\section{Experiments}
Our experimental evaluation aims to study the following questions: How well does MEMR perform on RL benchmarks, compared to state-of-the-art model-based and model-free algorithms in terms of sample efficiency, asymptotic performance and computation efficiency?

We evaluate MEMR on Mujoco benchmarks \cite{mujoco}. We compare our method with the state-of-the-art model-based method, MBPO \cite{mbpo}. As shown in Figure~\ref{fig:results}, MEMR matches the asymptotic performance of MBPO whereas MEMR only uses $\nicefrac{1}{4}$ policy updates and a fraction of model rollouts. It indicates that MEMR is more efficient in terms of model rollouts data used for policy updates. It also indicates orders of training speedup. Compared with the state-of-the-art model-free method, SAC \cite{sac}, MEMR matches the asymptotic performance and the data efficiency.


\section*{Acknowledgements}
This work has been sponsored by the U.S. Army Research Office (ARO) under award number W911NF1910362 and the U.S. National Science Foundation (NSF) under award number 1911229.

\bibliography{bib/icml_2020_workshop_chi}
\bibliographystyle{icml2020}

\appendix

\newpage
\onecolumn
\appendix
\appendixpage

\section{Maximum Entropy Sampling Criteria}
\label{sec:optimal_sampling_proof}
In this section, we present the proof of theorem~\ref{the:optimal_sampling_main}. 
We begin with a useful lemma as follows:
\begin{lemma}[Entropy Gain of Gaussian distribution]
  Suppose random variable $X\sim\mathcal{N}(\mu, \sigma^2)$, where $\mu$ and $\sigma$ are unknown. Now suppose we have observations $x_1,x_2,\cdots,x_N, N\gg 1$ and obtain an estimation of the distribution denoted as $\mathcal{N}_1(\mu_1,\sigma_{1}^2)$. If we have one more observation $t$ (variable) and obtain a new estimation of the distribution using $x_1,x_2,\cdots,x_N, t$, which is denoted as $\mathcal{N}_2(\mu_2,\sigma_{2}^2)$. Let the density of $\mathcal{N}_1$ be $f_1(x)$. Let the differential entropy of $\mathcal{N}_1$, $\mathcal{N}_2$ be $h_1$ and $h_2$. Let $g(t)=h_2-h_1$ Then, $g(t)=-{\log(\sqrt{2\pi} f_1(t)\sigma_{1})}/{N}$.
  \label{lemma:optimal_gaussian}
\end{lemma}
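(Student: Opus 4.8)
The plan is to reduce everything to tracking how the maximum-likelihood variance estimate changes when the single new point $t$ is appended, since the differential entropy of a Gaussian depends only on its variance. First I would recall that the differential entropy of $\mathcal{N}(\mu,\sigma^2)$ is $\frac12\log(2\pi e\sigma^2)$, so that $g(t)=h_2-h_1=\frac12\log(\sigma_2^2/\sigma_1^2)$ and the means $\mu_1,\mu_2$ drop out entirely. The task is then purely to relate $\sigma_2^2$ to $\sigma_1^2$, $\mu_1$, $t$, and $N$.

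Next I would write down the maximum-likelihood estimates $\mu_1=\frac1N\sum_i x_i$, $\sigma_1^2=\frac1N\sum_i(x_i-\mu_1)^2$ and the analogous formulas over the augmented sample $x_1,\dots,x_N,t$. From these one gets $\mu_2=\frac{N\mu_1+t}{N+1}$, and applying the parallel-axis identity $\sum_i(x_i-\mu_2)^2=\sum_i(x_i-\mu_1)^2+N(\mu_1-\mu_2)^2$ together with the substitutions $\mu_1-\mu_2=\frac{\mu_1-t}{N+1}$ and $t-\mu_2=\frac{N(t-\mu_1)}{N+1}$ and a short simplification yields the closed form $\sigma_2^2=\frac{N}{N+1}\sigma_1^2+\frac{N}{(N+1)^2}(t-\mu_1)^2$. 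Invoking the hypothesis $N\gg1$, I would replace $\frac{N}{N+1}$ by $1$ and $\frac{N}{(N+1)^2}$ by $\frac1N$, so that $\sigma_2^2/\sigma_1^2=1+\frac{(t-\mu_1)^2}{N\sigma_1^2}+o(1/N)$; since this correction is $O(1/N)$, the expansion $\log(1+x)=x+O(x^2)$ gives $g(t)=\frac{(t-\mu_1)^2}{2N\sigma_1^2}$ up to $o(1/N)$. Finally I would identify the right-hand side: since $f_1(t)=\frac{1}{\sqrt{2\pi}\sigma_1}\exp(-\frac{(t-\mu_1)^2}{2\sigma_1^2})$, we have $\sqrt{2\pi}f_1(t)\sigma_1=\exp(-\frac{(t-\mu_1)^2}{2\sigma_1^2})$, hence $-\log(\sqrt{2\pi}f_1(t)\sigma_1)/N=\frac{(t-\mu_1)^2}{2N\sigma_1^2}=g(t)$, which is the claim.

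The main obstacle I anticipate is not any individual computation but the bookkeeping of the approximations. The exact identity actually gives $g(t)=\frac12\log\frac{N}{N+1}+\frac12\log\!\big(1+\frac{(t-\mu_1)^2}{(N+1)\sigma_1^2}\big)=\frac{1}{2N}\big(\frac{(t-\mu_1)^2}{\sigma_1^2}-1\big)+o(1/N)$, which differs from the stated formula by the $t$-independent constant $-\frac{1}{2N}$. I would therefore either state the lemma as an equality up to $o(1/N)$ and an additive constant, or simply note that such a constant is immaterial because the lemma is only used inside an $\argmin$ over $t$ (equivalently over $i$) in Theorem~\ref{the:optimal_sampling_appendix}; the cleanest exposition keeps only the leading $t$-dependent term throughout, which is precisely what the assumption $N\gg1$ licenses. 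A secondary point worth stating explicitly is that appending one point does not change the variance estimate at leading order, so the "identical state distribution" style approximation used downstream is consistent with this $O(1/N)$ bound.
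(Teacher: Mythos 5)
Your proposal is correct and follows essentially the same route as the paper's proof: write $g(t)=\tfrac12\log(\sigma_2^2/\sigma_1^2)$ from the Gaussian entropy formula, compute the MLE variance update exactly (the paper reaches the same exact expression $\tfrac{N}{N+1}\bigl(1+\tfrac{(t-\mu_1)^2}{(N+1)\sigma_1^2}\bigr)$, albeit with messier and typo-laden algebra than your parallel-axis identity), then use $N\gg 1$ to linearize and identify the result with $-\log(\sqrt{2\pi}f_1(t)\sigma_1)/N$. Your additional remark that the stated equality silently discards a $t$-independent term of order $1/N$ (from $\tfrac12\log\tfrac{N}{N+1}$) is accurate and a genuine sharpening of the paper's presentation, and, as you say, it is immaterial downstream since the criterion only enters through an $\argmin$.
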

\begin{proof}
  According to the maximum likelihood estimation of Gaussian distribution, we obtain $\mu_1=(x_1+x_2+\cdots+x_N)/N$, $\sigma_{1}^2=\sum_{i=1}^{N}(x_i-\mu_1)^2/N$, $\mu_2(x)=(x_1+x_2+\cdots+x_N+t)/(N+1)$,  $\sigma_{2}^2(t)=(\sum_{i=1}^{N}(x_i-\mu_2)^2+(t-\mu_2)^2)/(N+1)$.
  Then,
  \begin{align}
  g(t)=\frac{1}{2}\log(\frac{\sigma_{2}(t)}{\sigma_{1}})^2&=\frac{1}{2}\log\frac{\sum_{n=1}^{N}(x_i-\mu_2)^2+(t-\mu_2)^2}{\sum_{n=1}^{N}(x_i-\mu_1)^2}\cdot \frac{N}{N+1}\\
  &=\frac{1}{2}\log(\frac{\sum_{n=1}^{N}(x_i-\mu_1+\mu_1-\mu_2)^2+(t-\mu_2+\mu_1-\mu_1)^2}{\sum_{n=1}^{N}(x_i-\mu_1)^2}\cdot \frac{N}{N+1})\\
  &=\frac{1}{2}\log((\frac{\sum_{n=1}^{N}(x_i-\mu_1)^2 +\sum_{n=1}^{N}(\mu_1-\mu_2)^2+2\sum_{n=1}^{N}(x_i-\mu_1)(\mu_1-\mu_2)}{\sum_{n=1}^{N}(x_i-\mu_1)^2}\\
  &+ \frac{(t-\mu_1)^2+(\mu_1-\mu_2)^2+2(t-\mu_1)(\mu_1-\mu_2)}{\sum_{n=1}^{N}(x_i-\mu_1)^2})\cdot \frac{N}{N+1})\nonumber\\
  &=\frac{1}{2}\log(\frac{N}{N+1}\cdot\frac{N\sigma_{1}^2+(N+1)(\mu_1-\mu_2)^2+(t-\mu_1)^2+2(t-\mu_1)(\mu_1-\mu_2)}{N\sigma_{1}^2})\\
  &=\frac{1}{2}\log(\frac{N}{N+1}(1+\frac{(t-\mu_1)^2}{N(N+1)\sigma_{1}^2}+\frac{(t-\mu_1)^2}{N\sigma_{1}^2}+\frac{2(t-\mu_1)(\mu_1-t)}{N(N+1)\sigma_{1}^2}))\\
  &=\frac{1}{2}\log(\frac{N}{N+1}(1+\frac{(t-\mu_1)^2}{(N+1)\sigma_{1}^2}))\\
  &\approx \frac{1}{2}\log(1+\frac{(t-\mu_1)^2}{N\sigma_{1}^2})\approx \frac{(t-\mu_1)^2}{2N\sigma_{1}^2}\\
  &=-\frac{\log(\sqrt{2\pi} f_1(t)\sigma_{1})}{N}
  \end{align}
\end{proof}
  
\begin{theorem}[Maximum Entropy Sampling Criteria]
  Assume $N_{\text{model}}\gg 1$ such that the state distribution of $\dmodel$ and $\mathcal{D}'_{\text{model}}$ are identical, then
  \begin{align}
  k=\argmin_i \log(\sqrt{2\pi}\pi_{\psi}(a_i|s_i) \sigma(\pi_{\psi}(\cdot|s_i)))
  \end{align}
  where $\pi_{\psi}(a_i|s_i)$ is the probability of model data policy at $(s_i,a_i)$, $\sigma(\pi_{\psi}(\cdot|s_i))$ is the standard deviation of the conditional distribution at $s_i$.
  \label{the:optimal_sampling_appendix}
\end{theorem}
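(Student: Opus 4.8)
The plan is to reduce the joint–entropy increment to a purely \emph{conditional} quantity and then invoke Lemma~\ref{lemma:optimal_gaussian} state by state. First I would apply the chain rule for (differential) entropy to both datasets, $H(S,A)=H(S)+H(A\mid S)$ and $H(S',A')=H(S')+H(A'\mid S')$. By the theorem's hypothesis that the state distributions of $\dmodel$ and $\mathcal{D}'_{\text{model}}$ coincide, $H(S')=H(S)$, so $H(S',A')-H(S,A)=H(A'\mid S')-H(A\mid S)$. Hence the optimisation in Problem~\ref{prob:optimal_sampling} is equivalent to choosing the index $k$ that maximises the increase in the average conditional differential entropy of the action given the state under the model‑derived policy $\pi_\psi$.

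Second, I would write the conditional entropy as an average over states of the differential entropy of the Gaussian conditional, $H(A\mid S)=\mathbb{E}_{s\sim p(s)}\big[h(\pi_\psi(\cdot\mid s))\big]$, and likewise for $H(A'\mid S')$. The modelling step is that inserting $(s_k,a_k)$ only refits the conditional at $s=s_k$, the conditionals at other states being unchanged to leading order, so the difference collapses to $p(s_k)\big(h(\pi'_\psi(\cdot\mid s_k))-h(\pi_\psi(\cdot\mid s_k))\big)$, where $\pi'_\psi(\cdot\mid s_k)$ is the maximum‑likelihood Gaussian refit after adding the single extra action observation $a_k$. In the same large‑$N_{\text{model}}$ regime, the number of action observations underlying the estimate $\pi_\psi(\cdot\mid s_k)$ is approximately $N:=N_{\text{model}}\,p(s_k)$.

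Third, I would apply Lemma~\ref{lemma:optimal_gaussian} with $f_1$ the density of $\pi_\psi(\cdot\mid s_k)$, $\sigma_1=\sigma(\pi_\psi(\cdot\mid s_k))$, and new observation $t=a_k$, giving $h(\pi'_\psi(\cdot\mid s_k))-h(\pi_\psi(\cdot\mid s_k))=-\log\!\big(\sqrt{2\pi}\,\pi_\psi(a_k\mid s_k)\,\sigma(\pi_\psi(\cdot\mid s_k))\big)/N$. Multiplying by the weight $p(s_k)$ cancels the $p(s_k)$ hidden in $N$, leaving $H(S',A')-H(S,A)\approx -\log\!\big(\sqrt{2\pi}\,\pi_\psi(a_k\mid s_k)\,\sigma(\pi_\psi(\cdot\mid s_k))\big)/N_{\text{model}}$, whose denominator no longer depends on $k$. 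Maximising this over $k$ is therefore the same as minimising $\log\!\big(\sqrt{2\pi}\,\pi_\psi(a_i\mid s_i)\,\sigma(\pi_\psi(\cdot\mid s_i))\big)$, which is the claimed criterion; for a diagonal multivariate Gaussian one applies the lemma coordinatewise and sums, replacing the scalar expression by the corresponding product over action dimensions.

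The step I expect to be the main obstacle, and the one where the argument is genuinely heuristic rather than rigorous, is the claim that adding one pair perturbs only the conditional at $s_k$ with effective sample size $N_{\text{model}}\,p(s_k)$. For a continuous state space $\pi_\psi$ is a shared function approximator, so strictly every conditional shifts; making this precise would require a localisation/binning argument (treating $\pi_\psi(\cdot\mid s)$ as estimated from a kernel‑weighted neighbourhood of $s$) or an influence‑function bound showing the off‑$s_k$ contributions are $o(1/N_{\text{model}})$. I would state this as the operating assumption, consistent with the "identical state distribution" hypothesis already built into the theorem, and note that it is precisely the regime in which the prioritisation rule of Equation~\ref{eq:priority} is derived.
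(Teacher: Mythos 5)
Your proposal is correct and follows essentially the same route as the paper's own proof: the chain-rule decomposition with cancellation of the state entropies, localization of the change to the conditional at $s_k$ with effective sample size $N_{\text{model}}\,p(s_k)$, and an application of Lemma~\ref{lemma:optimal_gaussian} in which the $p(s_k)$ factors cancel, leaving a $k$-independent denominator. Your explicit flagging of the heuristic ``only the conditional at $s_k$ changes'' step is a fair sharpening of what the paper itself concedes as a rough density estimation.
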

\begin{proof}
  We begin by expanding the state and action joint entropy of the model dataset
  \begin{align}
  H(S',A')-H(S,A)&=-\int_{s'}p(s')\log p(s')ds'+\int_{s'}p(s')H(A'|S=s')ds'\\
  & +\int_{s}p(s)\log p(s)ds-\int_{s}p(s)H(A|S=s)ds\nonumber\\
  &\approx p(s_i)(H(A'|s_i)-H(A|s_i))\quad (\text{Since}\ N_{model}\gg 1,\ p(s')\approx p(s))
  \end{align}
  According to Lemma~\ref{lemma:optimal_gaussian}, we obtain
  \begin{align}
  H(S',A')-H(S,A)=p(s_i)(-\frac{\log(\sqrt{2\pi}\pi_{\psi}(a_i|s_i) \sigma(\pi_{\psi}(\cdot|s_i))}{C\cdot p(s_i)})=-\frac{\log(\sqrt{2\pi}\pi_{\psi}(a_i|s_i) \sigma(\pi_{\psi}(\cdot|s_i))}{C}
  \end{align}
  Note that $N$ in Lemma~\ref{lemma:optimal_gaussian} denotes the number of state $s_i$ in $\dmodel$, which is equal to $p(s_i)\cdot C$, where $C$ is the model dataset size. This is a rough density estimation and more accurate methods are left for future work. Thus,
  \begin{align}
  k=\argmax_i (H(S',A')-H(S,A))=\argmin_i \log(\sqrt{2\pi}\pi_{\psi}(a_i|s_i) \sigma(\pi_{\psi}(\cdot|s_i)))
  \end{align}
\end{proof}

\section{Related Work}
Model-free reinforcement learning (MFRL) learns optimal policy $\pi$ by directly taking gradient of the objective function \cite{npg,trpo,ppo} or estimate the state-action value function and derive the optimal policy \cite{dqn,sac}. These approaches require large amounts of environment interactions, which is not suitable for environments where sampling is expensive. On the contrary, model-based reinforcement learning (MBRL) learns a dynamics model and directly perform model predictive control (MPC) \cite{mb_mf,pets,pilco} or derives the policy using model generated rollouts \cite{me_trpo,mbpo,steve,slbo}.

Learning accurate dynamics model is often the bottleneck for MBRL to match the asymptotic performance of MFRL counterparts. Although Gaussian process is shown effective in low-dimensional data regime \cite{pilco,guided_policy_search}, it is hard to generalize to high-dimensional environments like \cite{mujoco, dqn}. \cite{mb_mf} first utilizes deterministic neural network dynamics model for model predictive control in robotics and \cite{pets} improves the idea with probabilistic ensemble models that matches the asymptotic performance with MFRL baselines. \cite{poplin} combines policy networks with online planning and achieves even superior performance on challenging benchmarks. Common MPC or planning methods include shooting method \cite{random_sampling_shooting}, cross-entropy method \cite{cross_entropy_method} and model predictive path integral control \cite{mppi}. Such planning methods would potentially over exploit the learned dynamics on long horizon predictions that may impair the performance. It is also computational expensive to perform in real time. In such cases, learning a policy, e.g. parameterized by a neural network, is desired for better generalization. 

Dyna-style MBRL utilizes learned dynamics to generate model rollouts to learn a good policy \cite{dyna}. \cite{model_based_value_expansion} and \cite{steve} utilizes the model to better estimate the value function in order to improve the sample efficiency. \cite{me_trpo} optimizes the policy network via policy gradient algorithm on the trajectories generated by the models. \cite{mbrl_atari} proposes video prediction network for model-based Atari games. \cite{slbo} develops a theoretical framework that provides monotonic improvement of the to a local maximum of the expected reward for MBRL. Model-based policy optimization (MBPO) \cite{mbpo} achieves state-of-the-art sample efficiency and matches the asymptotic performance of MFRL approaches. MBPO optimizes a policy with soft actor-critic (SAC) \cite{sac} under the data distribution collected by unrolling the learned dynamics model using the current policy. Our approach combines \cite{mbpo} and \cite{dyna} by proposing an non-trivial sampling approach to significantly reduce the number of policy updates and model rollouts that obtain asymptotic performance.
  
\section{Ablation Study}\label{appendix:ablation}
In this section, we make ablation studies to our proposed method. We primarily analyze how the performance of our algorithm changes by varying the size of the model dataset, the number of policy updates per environment step and the prioritization strength $\alpha$.

\begin{figure}
  \centering
  \includegraphics[width=\linewidth]{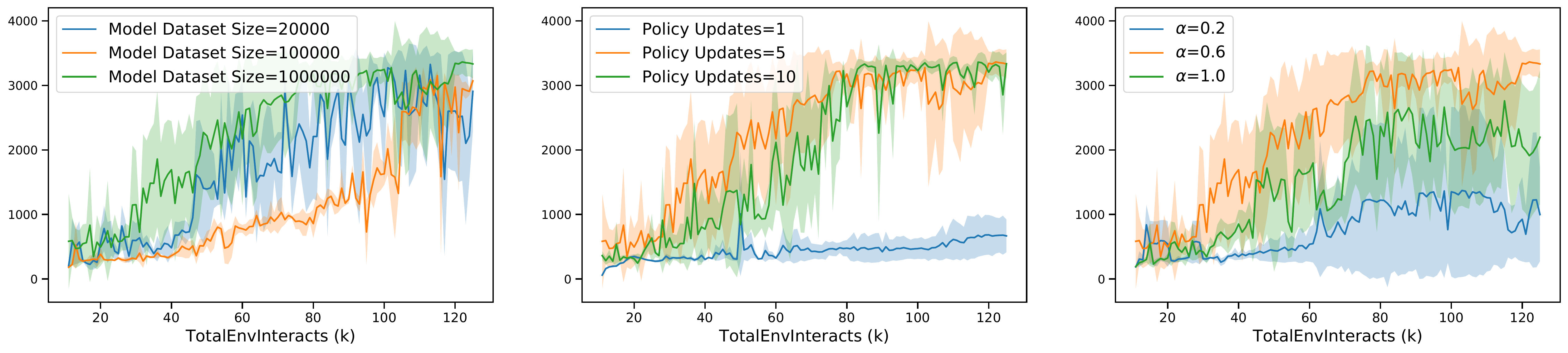}
  \caption{The results of ablation study. \textbf{Model Datset Size} refers to the size of $\dmodel$ shown in Algorithm~\ref{alg:memr}. \textbf{Policy Updates} refers to the number of policy updates per environment step. It indicates how informative the model rollouts are to the SAC agent. \textbf{Prioritization strength $\alpha$} is the exponent term used to calculate probability of states being sampled. The less it is, the more uniform the distribution would be.}
  \label{fig:ablation}
\end{figure}

\paragraph{Model dataset size}
The size of the model dataset $\dmodel$ affects how fast the algorithm converges. Since SAC is an off-policy algorithm, the same experience is expected to be revisited several times on average \cite{prioritized_experience_replay}. A small dataset would hinders the learning progress as the same transition only resides in the buffer for only a short period. On the other hand, a large model dataset would actually decrease the sample diversity of each batch used to perform policy updates.

\paragraph{Number of policy updates per environment step}
As shown in Figure~\ref{fig:results}, MEMR converges as fast as SAC in terms of policy updates. Surprisingly, we found that increasing the number of policy updates per environment step doesn't help to increase the convergence speed as shown in Figure~\ref{fig:ablation}. It indicates that much computation power is wasted in \cite{mbpo} on less informative model rollouts that barely help the learning of the value functions in SAC.

\paragraph{Prioritization strength}
Strong prioritization leads to overfit to local optimum whereas weak prioritization leads to less model rollouts diversity. As shown in Figure~\ref{fig:ablation}, we found that $\alpha=0.6$ works best for all benchmarks.

%
  
\section{Hyperparameter Settings}

\begin{table}[H]
  \caption{Hyperparameter setting for MEMR}
  \label{table:hyperparameter}
  \centering
  \begin{tabular}{ccccc}
    \toprule
    & HalfCheetah-v2 & Walker2d-v2 & Ant-v2 & Hopper-v2 \\
    \midrule
    Total number of steps & 400000 & 300000 & 300000 & 125000 \\
    Mode update frequency & \multicolumn{4}{c}{250}  \\
    Model rollouts per environment step ($M$) & \multicolumn{4}{c}{400} \\
    Prioritization strength ($\alpha$)  &  \multicolumn{4}{c}{0.6} \\
    Importance weights annealing ($\beta$)  & \multicolumn{4}{c}{Linear anneal from 0.4 to 1.0} \\
    Number of epochs to update model data policy $D$ & \multicolumn{4}{c}{2}\\
    Policy updates per environment step $G$ & \multicolumn{4}{c}{5} \\
    Size of model dataset & \multicolumn{2}{c}{3e6} & 6e6 & 1e6 \\
    \bottomrule
  \end{tabular}
\end{table}

%
%
%

\end{document}